\documentclass[3p,11pt,nopreprintline]{elsarticle}

\usepackage{graphicx} 
\usepackage{amsmath}
\usepackage{amssymb}
\usepackage{amsthm}
\usepackage{multirow}
\usepackage[dvipsnames]{xcolor}
\usepackage{enumitem}
\usepackage{makecell}
\usepackage{hyperref}        
\usepackage{cleveref}
\usepackage{pdfpages}
\usepackage{bm}

\usepackage{pifont}
\newcommand{\cmark}{{\color{Green}\ding{51}}}%
\newcommand{\pmark}{{\color{Gray}$\boldsymbol{\thicksim}$}}%
\newcommand{\xmark}{{\color{Red}\ding{55}}}%
\newcommand{\nmark}{{\scriptsize\color{gray} N/A}}

\usepackage{booktabs} %

\newtheorem{proposition}{Proposition}

\newcommand{\bx}{\boldsymbol{x}}

\newcommand{\bd}{\bm{\mathrm{d}}}

\newcommand{\balpha}{\boldsymbol{\alpha}}

\begin{document}

\begin{frontmatter}

\title{Revisiting Explainable AI through Model-Independent Concept Dictionaries}

\author[toronto,vector]{Thomas Schnake\corref{cor1}}
\ead{t.schnake@utoronto.ca}
\author[ikim,bifold]{Doreen Schöppenthau}
\author[ikim,bifold]{Alexander Meyer}
\author[laval,mila]{Jacques Corbeil}
\author[bifold,tub,korea,mpi]{Klaus-Robert Müller\corref{cor1}}
\ead{klaus-robert.mueller@tu-berlin.de}
\author[bifold,ikim]{Grégoire Montavon\corref{cor1}}
\ead{gregoire.montavon@charite.de}

\cortext[cor1]{Corresponding author}

\address[bifold]{BIFOLD\,--\,Berlin Institute for the Foundations of Learning and Data, Berlin, Germany}
\address[tub]{Machine Learning Group, Technische Universit\"at Berlin, Germany}
\address[ikim]{Institute for AI in Medicine (IKIM), Charité -- Universitätsmedizin Berlin, Germany}
\address[toronto]{Department of Chemistry, Chemical Physics Theory Group, University of Toronto, Toronto, Canada}
\address[vector]{Vector Institute for Artificial Intelligence, Toronto, Canada}
\address[korea]{Department of Artificial Intelligence, Korea University, Seoul, Korea}
\address[laval]{Department of molecular medicine, Université Laval, Québec, QC, Canada}
\address[mila]{Mila\,--\,Quebec Artificial Intelligence
Institute, Canada}
\address[mpi]{Max Planck Institute for Informatics, Saarbr\"ucken, Germany}

\begin{abstract}
Modern applications of AI rely on increasingly complex models. Explainable AI (XAI) has emerged as a set of techniques aimed at improving model transparency. However, existing XAI methods typically assume input features to be inherently interpretable, or they rely on intermediate internal abstractions that are difficult to characterize and highly architecture-specific, hindering consistent use across models. To address these limitations, we propose DictXAI, a method that defines concepts directly in the input domain via a \textit{dictionary}---a large, potentially overcomplete set of predefined elements, each carrying an interpretable meaning. Technically, DictXAI first computes a sparse code of the input and then attributes the model's prediction to the associated dictionary elements. We demonstrate the actionable nature of DictXAI explanations, showing that they can attribute AI malfunctions (e.g.,~Clever Hans effects) directly to identifiable artifact patterns in the data, while fostering human-AI alignment on intricate biomedical signals. We further demonstrate our method's ability to operate across a wide variety of dictionaries, including learned image bases, analytically defined waveforms for electrocardiography, and experimentally acquired dictionary elements. Overall, our results show that DictXAI provides more interpretable, actionable, and architecture-agnostic insights than classical XAI or existing concept-based approaches.
\end{abstract}

\begin{keyword}
Explainable AI (XAI) \sep
Concept-based explanations \sep
Dictionary learning \sep
Sparse coding \sep
Model-agnostic interpretability
\end{keyword}

\end{frontmatter}

\section{Introduction}

Recent advances in artificial intelligence (AI) have produced models of unprecedented scale and complexity. Explainable AI (XAI)~\cite{samek2021xai_review, Arrieta2020} addresses the black-box nature of these models, aiming to reintroduce transparency into their prediction strategies---whether to meet regulatory demands~\cite{Zhang2022} or to guide scientific inquiry~\cite{Wong2024XaiAntibiotics, esders2025xai_qc}. Although XAI methods can scale to such models, rigorously assessing explanation quality remains a multifaceted challenge~\cite{swartout1993desiderata, Nauta2023}. Explanations should not only be faithful to the model's decision strategy, but also be presented in a format that is understandable to the explainee and actionable for relevant downstream tasks.

\medskip

In pursuit of more human-aligned explanations, concept-based XAI~(e.g.,~\cite{kim2018TCAV, achtibat2023concept_relevance}) has emerged as a prominent direction. Its objective is to explain model behavior via high-level semantic units (referred to as ``concepts'') rather than raw input features. In vision tasks, for example, concepts may include textures or object parts, each abstracting over diverse pixel arrangements. To identify such concepts, dominant paradigms typically probe internal representations by isolating neurons or activation subspaces that respond to specific inputs~\cite{achtibat2023concept_relevance, huben2024sparse}. Yet this reliance on intermediate layers faces fundamental limitations: neurons are frequently polysemantic~\cite{Dreyer2024}, and representations are typically not comparable across architectures.

\medskip

\begin{figure}[t]
    \makebox[\textwidth][c]{\includegraphics[width=1.05\textwidth]{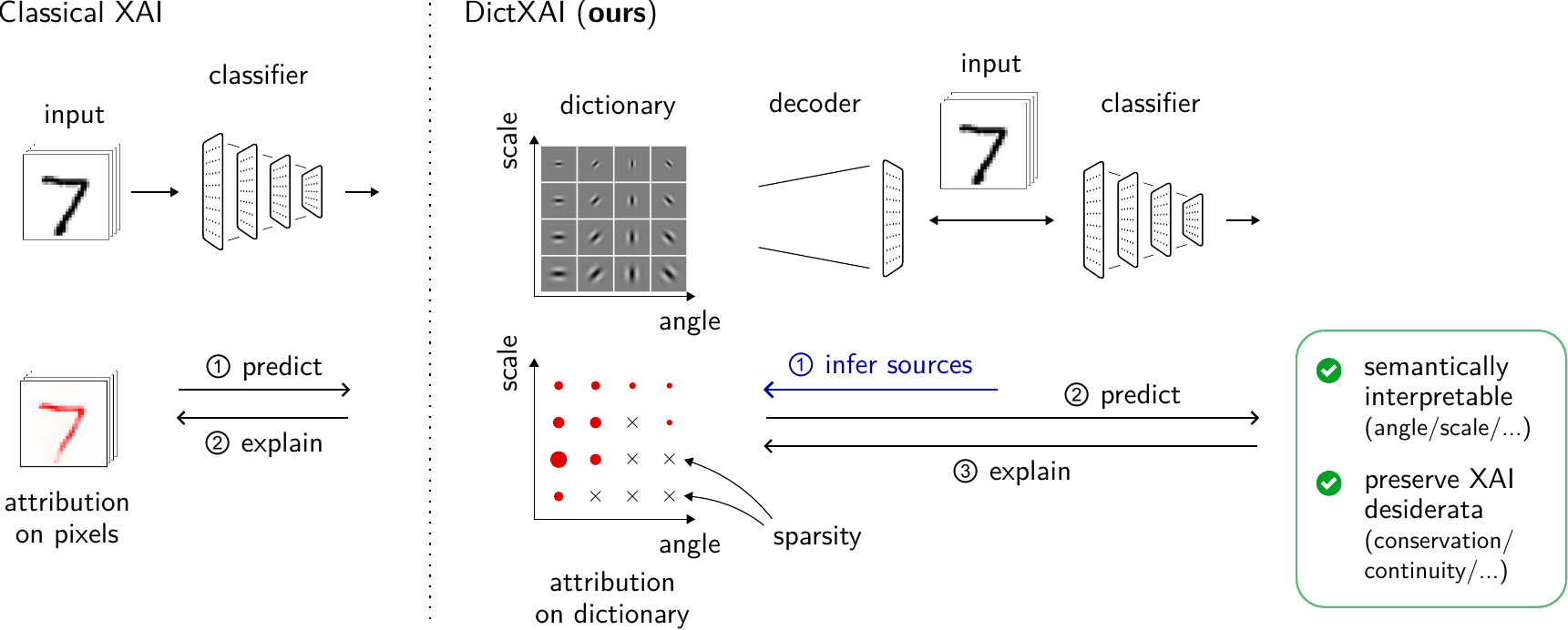}}
    \caption{\textbf{Overview of DictXAI.}
    \textit{Left (Classical XAI):} Standard workflows perform prediction and subsequently explain model decisions by computing attribution heatmaps directly on raw input pixels. \textit{Right (DictXAI, ours):} Our approach introduces a predefined dictionary of concepts (here, multi-scale oriented Gabor filters). First, sources are inferred via sparse coding of the input~\ding{172}. The input is then reconstructed via the decoder and passed to the classifier to predict~\ding{173}, after which explanations are propagated back through the classifier and the decoder to the dictionary coefficients~\ding{174}. This produces sparse explanations (where crosses denote zero attribution due to sparsity and red dot sizes indicate attribution magnitude) that are semantically interpretable along explicit axes (e.g., angle and scale) while preserving foundational XAI desiderata such as conservation and continuity.}
    \label{fig:overview_method}
\end{figure}

\medskip

To overcome these limitations, we introduce DictXAI, an explanation framework that represents concepts directly in the input domain by projecting inputs onto a predefined, human-inspectable dictionary (see Figure~\ref{fig:overview_method}). Depending on the application, dictionary elements may comprise mathematically parameterized primitives (e.g., Gabor filters or temporal wavelets with tunable scale and orientation), learned visual bases~\cite{Lee2006}, and empirical reference spectra derived from laboratory measurements. While concept-based methods in latent space typically enforce orthogonality~\cite{Zhou2018, Chormai2024}, DictXAI introduces overcompleteness as a core design principle, enabling richly expressive concept extraction directly in the input domain. Furthermore, unlike recent autoencoder-based approaches~\cite{huben2024sparse}, our framework avoids complex non-linear encoders; explanations are governed strictly by the original model and the explicit dictionary elements, preserving end-to-end transparency. Owing to this architectural simplicity, DictXAI naturally integrates with existing attribution techniques---such as Layer-wise Relevance Propagation (LRP)~\cite{bach2015lrp}, Shapley values~\cite{Strumbelj2010}, and higher-order extensions~\cite{eberle2022bilrp}. DictXAI thus inherits the formal faithfulness and computational flexibility of these established methods while substantially elevating their interpretability and human alignment.

\medskip

Through an application-grounded evaluation spanning multiple use cases, we demonstrate DictXAI's advantages over classical XAI methods and existing concept-based extensions. First, we show its utility in purging a model of its reliance on spurious correlations, the so-called Clever Hans effect~\cite{lapuschkin2019clever_hans, Kauffmann2025}. By selectively removing the offending input-space dictionary atoms, DictXAI blocks the flawed strategy at its source and prevents it from re-emerging through alternative internal circuits. Furthermore, we demonstrate DictXAI's ability to operate across a wide variety of application-relevant data modalities and concept spaces. This includes using dictionaries of amplitude-modulated sinusoidal (AMS) waveforms to interpret QRS complex properties (e.g., duration) predicted from electrocardiogram (ECG) signals, as well as utilizing spectral profiles of microbial isolates as an interpretable basis for mass spectrometry analysis. In each case, DictXAI derives human-understandable and actionable insights that go significantly beyond the capabilities of classical pixel-level attribution or existing concept-based methods.

\medskip

The remainder of this paper is structured as follows. Section~\ref{section:related_work} reviews related work in latent- and input-level concept-based XAI. Section~\ref{section:dictxai} introduces the mathematical framework of DictXAI, detailing its technical properties and the flexibility afforded by diverse dictionary choices. Section~\ref{section:evaluation} presents an application-grounded evaluation of our method. Finally, Section~\ref{section:conclusion} concludes the paper and outlines directions for future research.

\section{Related Work}
\label{section:related_work}

DictXAI formulates concept-based explanations directly in the input domain using overcomplete dictionaries. Accordingly, our work connects to two primary bodies of literature: concept-based XAI methods operating on internal representations (Section~\ref{section:relatedwork-latent}) and attribution methods that construct bases directly in the input domain (Section~\ref{section:relatedwork-input}). For a broader overview of XAI, including classical attribution techniques such as Layer-wise Relevance Propagation (LRP) upon which our explanation framework builds, we refer to comprehensive review papers~\cite{samek2021xai_review, Arrieta2020}.

\subsection{Concepts in Latent Activation Layers}
\label{section:relatedwork-latent}

The intermediate layers of deep neural networks provide a rich representational space in which human-understandable concepts can emerge~\cite{Yamins2014, Bau2020}. Numerous works in XAI have sought to leverage these internal representations. For instance, Concept Relevance Propagation (CRP)~\cite{achtibat2023concept_relevance} extends the LRP framework by filtering relevance flows through individual internal units (e.g., neurons or convolutional channels), where each unit is associated with a concept post hoc by inspecting the instances that maximally activate it. Moving from single units to layer-wide representations, \citet{kim2018TCAV} and \citet{Zhou2018} utilize auxiliary concept datasets to identify concept-discriminative directions---termed Concept Activation Vectors (CAVs)---enabling attribution across distributed semantic directions rather than isolated channels.

\smallskip

Subsequent work has generalized concept vectors to multidimensional subspaces, introducing completeness guarantees to the resulting explanations~\cite{Vielhaben2023, Chormai2024}. These frameworks bypass supervised labeling by extracting concept subspaces in an unsupervised manner via clustering or relevance-driven subspace analysis. In a related vein, \citet{fel2023concept_extr} cast concept discovery as an unsupervised dictionary learning problem, applying non-negative matrix factorization (NMF) to hidden activations to uncover latent concept atoms. More recently, several works have turned to training sparse autoencoders (SAEs) on intermediate representations to isolate highly granular, monosemantic latent concepts~\cite{huben2024sparse, bussmann2025learning, zaigrajew2025interpreting}.

\smallskip

Whereas all of the aforementioned methods are principally designed to operate in hidden layers, DictXAI shifts the explanation strategy entirely to the input domain. Grounding concepts directly in the input space confers three distinct advantages: (i) it avoids inheriting the black-box opacity of intermediate representations or introducing auxiliary autoencoding models fitted on them; (ii) unlike intermediate layers, the input domain provides rich, explicit semantics---such as scale, angle, or physical concentration---that dictionary elements can directly map to; and (iii) input-defined concepts are inherently model-agnostic, enabling direct, unmediated comparisons of decision strategies across differing architectures.

\subsection{Concepts within the Input Domain}
\label{section:relatedwork-input}

While latent-space approaches extract abstract representations from hidden layers, a parallel line of research decomposes explanations directly within the input domain using structured mathematical bases. For instance, \citet{vielhaben2024inspect_layer} proposed DFT-LRP, which introduces an inspection layer that allows propagation-based explanation methods to express relevance in a Fourier basis, particularly suited for mapping to the frequency domain that characterizes audio applications. A similar Fourier-basis approach, but applied to two-dimensional signals, was used by \citet{Kauffmann2025} to explain image-based anomaly detection models; there, frequency-domain explanations were shown to uniquely reveal the model's sensitivity to intricate technical artifacts, such as the presence or absence of image antialiasing.

\smallskip

Other frameworks leverage localized spatial-frequency representations. \citet{kasmi2025WAM} proposed the Wavelet Attribution Method (WAM), which transforms the input into the wavelet domain to obtain localized spatial-frequency representations, enabling standard gradient-based explanation methods to assign importance scores to individual wavelet functions. Similarly, \citet{kolek2022CartoonX} proposed CartoonX (later extended in~\cite{kolek2023ShearletX}), which transforms inputs into the wavelet domain and learns a sparse perturbation mask over the wavelet coefficients via rate-distortion optimization.

\smallskip

Despite their utility, these existing input-domain frameworks remain fundamentally constrained by fixed, non-redundant or orthogonal basis sets where the number and functional form of basis elements are strictly tied to predefined mathematical transforms. Consequently, they lack the flexibility to incorporate empirical domain knowledge, learned dictionaries, or non-orthogonal primitives. In contrast, DictXAI affords complete freedom in the choice of representation: dictionaries can comprise parameterized functions, data-driven learned atoms, or experimentally acquired physical spectra. Furthermore, by embracing overcompleteness, DictXAI can decouple subtle, highly correlated concept variations---such as continuous scale shifts in physiological waveforms or overlapping spectral signatures in microbiological mixtures---that rigid orthogonal transforms cannot isolate.

\section{The DictXAI Method}\label{section:dictxai}

In this section, we introduce DictXAI, a framework that explains machine learning predictions in terms of a predefined, potentially overcomplete dictionary of input-domain concepts. In the context of images, each dictionary element (or \textit{atom}) can represent, for example, an edge at a specific spatial location and orientation, or more complex pixel arrangements observed in the data. For a given input $\bx \in \mathbb{R}^d$ with prediction $y = f(\bx)$, DictXAI operates via three sequential steps:  (1)~express the input data in terms of the dictionary elements using linear sparse coding, (2)~feed the reconstructed input into the ML model to obtain the prediction, and (3)~attribute the prediction to dictionary atoms that contribute to it. This pipeline is illustrated at a high level in Figure~\ref{fig:overview_method}. We detail each step below.

\paragraph{Step 1: Linear Sparse Coding} DictXAI assumes a dictionary of size $K$, where each dictionary element $\bd_j \in \mathbb{R}^d$ is expressible in input space. Linear sparse coding sets as an objective to re-express the input data as a sparse combination of the dictionary elements \cite{Lee2006,Elad2010sparse_repres_book}. For a given data point $\bx \in \mathbb{R}^d$, this task can be formulated mathematically as the LASSO objective~\cite{Tibshirani1996}:
\begin{align}
\min_{\balpha} \;  \Big\{\,\frac12\,\Big\| \sum_{j=1}^K \alpha_j \bd_j - \bx\Big\|^2 + \lambda \sum_{j=1}^K |\alpha_j|\, \Big\}
\label{eq:sparsecoding}
\end{align}
where the variables $(\alpha_j)_j$ denote the weighting coefficients. The first term in the objective ensures the reconstruction error is minimized. The second term enforces sparsity by driving non-contributing coefficients $\alpha_j$ to zero. We note that, if these coefficients are calculated through an encoding function rather than optimized directly, the approach becomes conceptually equivalent to Sparse Autoencoders, commonly used in mechanistic interpretability~\cite{huben2024sparse}. However, directly solving Eq.~\eqref{eq:sparsecoding} offers two clear advantages. Because the coefficients $\alpha_j$ are optimized freely per sample, this optimization is not constrained by the capacity limits of a fixed encoder. Furthermore, the absence of an encoder eliminates structural architectural bias and avoids the risk of out-of-distribution generalization failure.

\paragraph{Step 2: Recovering the Forward Pass} Having extracted a sparse representation of the input data, we must now re-establish the exact input-output relationship to be explained. Standard linear sparse coding produces a lossy reconstruction; that is, the linear combination of dictionary elements $\hat{\bx} = \sum_{j=1}^K \alpha_j \bd_j$ does not exactly match the original input $\bx$. Feeding $\hat{\bx}$ into the classifier can alter the prediction of the downstream ML model as well as the decision strategy employed to reach it. To prevent this distortion and remain faithful to the original prediction behavior, we reformulate the sparse coding problem above to deliver an explicit, lossless input reconstruction:
\begin{align}
\min_{\balpha,\bd_0} \; \Big\{\,\frac12 \| \bd_0 \|^2 + \lambda \sum_{j=1}^K |\alpha_j|\,\Big\} \quad \text{s.t.} \quad \bx = \sum_{j=0}^K \alpha_j \bd_j \quad \text{and} \quad \alpha_0 = 1.
\label{eq:sparsecoding-rev}
\end{align}
In this formulation, we have introduced an instance-specific dictionary atom $\bd_0 \in \mathbb{R}^d$ whose magnitude acts as a slack variable that absorbs the residual reconstruction error. In practice, the solution to this optimization problem is equivalently obtained by solving Eq.~\eqref{eq:sparsecoding} for $\balpha$ and setting the residual atom to $\bd_0 = \bx - \hat{\bx}$.

\smallskip
We can now formally link the expressions of each atom in the dictionary to the model's prediction via the equation:
\begin{align}
y = f\Big(\sum_{j=0}^K \alpha_j \bd_j\Big).
\label{eq:forward}
\end{align}
In other words, our approach yields a formulation that is functionally equivalent to the original $y = f(\bx)$, but crucially, incorporates the interpretability structure uncovered in Step 1. The notion of improving interpretability while maintaining exact functional equivalence is much in the spirit of advanced interpretability techniques, such as virtual layers~\cite{vielhaben2024inspect_layer, Chormai2024}.

\paragraph{Step 3: Attribution to the Dictionary Atoms} Leveraging this computational structure, we can now focus on the problem of attributing the model prediction to the dictionary atoms. Many baseline frameworks commonly employed for input features can be readily extended to dictionary elements. We sketch the approach below for three cases representative of the major families of attribution methods, namely, occlusion-based, gradient-based, and propagation-based:
\begin{align}
\text{Occlusion~\cite{bluecher2022preddiff}} \quad & R_j = f(\bx) - f(\bx - \alpha_j \bd_j), \label{eq:dict_occlusion} \\
\text{Integrated Gradients (IG)~\cite{sundararajan2017integrated_gradient}} \quad & R_j = \int_0^1 \frac{\partial y}{\partial \alpha_j} \frac{\partial \alpha_j}{\partial t} \, dt, \label{eq:dict_ig} \\
\text{Layer-wise Relevance Propagation (LRP)~\cite{bach2015lrp}} \quad & R_j = \sum_{i=1}^d \frac{[\alpha_j\bd_j]_i}{x_i} R_i. \label{eq:dict_lrp}
\end{align}
For occlusion, removing atom $j$ corresponds directly to subtracting its contribution $\alpha_j \bd_j$ from the input. This formulation naturally generalizes to Shapley Value Sampling~\cite{Strumbelj2010}, which aggregates these marginal effects across atom coalitions. For IG, the integration path is given by linearly scaling the weighting coefficients from zero to their actual value, i.e., $\balpha(t) = t \balpha$. For LRP, input-feature attributions $(R_i)_{i=1}^d$ are propagated backward to the dictionary layer, here using the basic LRP-0 rule~\cite{Montavon2019} with the convention $0/0=0$. For improved stability, the dictionary layer and the model's first linear layer can be merged into a single linear layer, granting access to more robust propagation variants based on LRP-$\gamma$~\cite{Montavon2019}.

\subsection{Choice of Dictionary}
\label{sec:finding_dictionaries}

The choice of dictionary defines the conceptual basis for interpretability and directly governs the expressiveness of the resulting explanations. In practice, dictionaries fall into three primary categories, each presenting distinct trade-offs between flexibility, parametric transparency, and physical grounding.

\paragraph{Learned Dictionaries}
A fully data-driven approach constructs dictionaries using unsupervised learning algorithms such as sparse dictionary learning~\cite{olshausen1996emergence,Lee2006}. Most techniques initialize a random dictionary and alternate between identifying the optimal sparse code and updating the dictionary atoms to minimize reconstruction error; other approaches optimize end-to-end via gradient descent by incorporating a parameterized encoder function. To illustrate this behavior, \Cref{fig:dictionary_comparison}A displays a dictionary of $K=800$ atoms trained on the MNIST handwritten digit benchmark (for training details, see Supplementary Note~A). The resulting atoms capture refined, domain-specific primitives---here, an expressive variety of localized stroke contours. Because the dictionary atoms and sparse inference are jointly optimized for the target data distribution, learned dictionaries offer near-optimal trade-offs between sparsity and reconstruction fidelity. On the other hand, learned atoms lack explicit parametric formulas, which can complicate the qualitative interpretation of downstream explanations. Furthermore, data-driven dictionary learning assumes an abundant sample supply; data scarcity risks producing overfitted dictionary elements that impede the extraction of clear, human-aligned concepts.

\begin{figure}[t!]
    \centering
    \includegraphics[width=\textwidth]{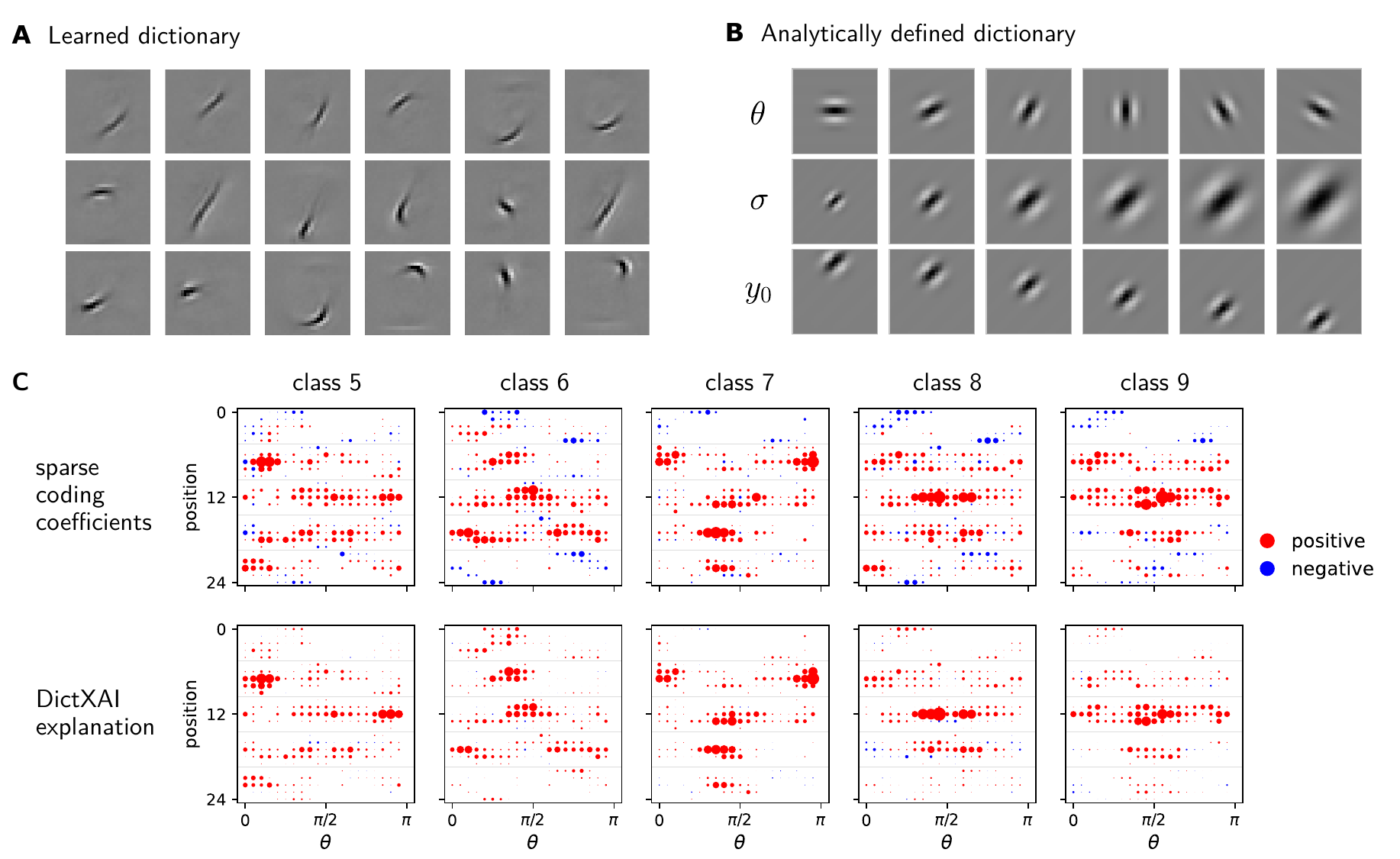}
    \caption{\textbf{Dictionary designs and concept attribution profiles on MNIST.}
    \textbf{A}~Subset of atoms of intermediate $L_2$ norm from a dictionary learned on the MNIST training set via sparse dictionary learning ($K = 800$; $\mathrm{MSE} = 0.017$ on $1\,000$ test images; $\lambda=10^{-3}$).
    \textbf{B}~Parameter sweeps from an analytically defined Gabor dictionary ($K\approx10\,000$; $\mathrm{MSE} = 0.032$ on the same $1\,000$ test images; $\lambda=10^{-3}$). Each row isolates variations along a single parameter while holding others fixed: orientation $\theta$ (top), envelope scale $\sigma$ (middle), and vertical center $y_0$ (bottom).
    \textbf{C}~Class-aggregated profiles for the Gabor dictionary across test digits (classes 5--9, 30 samples each). Panels display Hinton diagrams where marker size denotes magnitude and color denotes sign (red: positive, blue: negative). Top row: mean sparse coding coefficients; bottom row: DictXAI explanation (mean relevance scores). Values are plotted across the flattened $5\times 5$ spatial grid ($y$-axis, indices 0--24, row-major) and orientation $\theta \in [0, \pi]$ ($x$-axis), with the scale parameter $\sigma$ averaged out.}
    \label{fig:dictionary_comparison}
\end{figure}

\paragraph{Analytically Parameterized Dictionaries}
When data is scarce or domain knowledge suggests a natural structure for the relevant concepts, dictionaries can instead be defined in closed form and indexed by interpretable parameters. A prominent example is a family of Gabor filters (\Cref{fig:dictionary_comparison}B; formal description in Supplementary Note~A), which captures localized texture and frequency information across specific spatial frequencies, orientations, phase offsets, aspect ratios, and spatial locations. Discretizing these parameters yields a structured dictionary of analytical atoms. A major advantage of this design is that each atom $\bd_j$ possesses an explicit coordinate in the underlying parameter space. Consequently, downstream explanations can be mapped directly into the metric space spanned by these parameters, facilitating intuitive visualization. For example, when atoms vary along grid-discretized parameters such as orientation and spatial position, feature attributions can be rendered as a Hinton diagram over that parameter grid (\Cref{fig:dictionary_comparison}C). This enables practitioners to systematically identify which parameterized features the model relies on when making predictions, both locally and across the dataset. However, because analytical dictionaries are not adapted to the data distribution, spanning continuous parameter spaces requires extensive discretization: in our illustrative setting, an overcomplete dictionary comprising over $12\times$ more elements than the learned baseline still achieves almost two times higher reconstruction error ($\mathrm{MSE} = 0.032$ vs.\ $0.017$).

\paragraph{Dictionaries of Elementary Constituents}
A third category emerges in physical and biological domains where the observed signal is known to be an additive mixture of basic physical components in unknown proportions. This setting is ubiquitous in applications such as spectrometry and computational cytometry, where an aggregate measurement reflects a superposition of constituent profiles---such as a bulk chemical spectrum composed of pure molecular signatures, or bulk RNA sequencing deconvolved into cell-type-specific references. In these scenarios, the dictionary is neither learned statistically from the target dataset nor generated from an idealized functional form; rather, it comprises an empirical reference library isolated through prior experimental characterization. The principal advantage of such dictionaries is their direct physical grounding: each atom corresponds unambiguously to a concrete, real-world entity. However, their fidelity depends strictly on library completeness; unmodeled chemical contaminants or uncharacterized cell types cannot be captured within a fixed constituent basis.

\medskip

Importantly, these dictionary archetypes are not mutually exclusive. In principle, one could construct hybrid dictionaries that augment analytical or physical reference bases with learnable atoms. When fixed functional families or reference libraries cannot fully capture all factors of variation in the data, auxiliary learnable elements could absorb the remaining residual variance while preserving the explicit interpretability of the known components.

\subsection{Properties of DictXAI}

\newcommand{\pointlessparagraph}[1]{\medskip
\noindent
\emph{#1}}

In this section, we analyze DictXAI across multiple operational criteria, ranging from formally verifiable axiomatic guarantees to qualitative usability attributes. Together, these properties cover the five foundational explanation desiderata established by \citet{swartout1993desiderata} and adopted in modern XAI~\cite{Kauffmann2024,bender2026towards_desi}: faithfulness (D1), understandability (D2), sufficiency (D3), low overhead (D4), and runtime efficiency (D5). Table~\ref{table:technical} provides a systematic comparative summary alongside representative concept- and transform-based methods from the literature. To reflect the distinct dictionary paradigms established in \Cref{sec:finding_dictionaries}, we evaluate DictXAI under two primary operating modes: a \emph{curated} regime based on pre-defined analytical or physical components, and a fully data-driven \emph{learned} regime. As summarized in Table~\ref{table:technical}, while both regimes share identical faithfulness (D1) and computational complexity (D5), they exhibit complementary trade-offs across usability, steerability, and annotation requirements.

\pointlessparagraph{D1: Faithfulness (Conservation and Input Stability)} requires an explanation to reflect the model's true decision mechanism without distorting evidence or behaving erratically under minor noise. DictXAI satisfies this through two foundational guarantees.  First, \emph{conservation} ensures that decomposed relevance scores sum exactly to the model's output to permit unambiguous interpretation of relative evidence shares. By prepending a single linear dictionary layer to the network and executing standard attribution engines (e.g.,\ LRP or Shapley values) over this extended architecture, DictXAI directly inherits their conservation guarantees, satisfying $\sum_{j=0}^K R_j = \sum_{i=1}^d R_i = f(\bx)$. While transform propagation frameworks (DFT-LRP, WAM) also conserve relevance, methods rooted in directional projections (TCAV) or rate-distortion masking (CartoonX) are non-conservative by design. Second, \emph{input stability} demands that bounded input perturbations yield bounded variations in concept attributions. DictXAI guarantees this stability by lifting the smoothness of the underlying model directly into the dictionary domain:

\begin{proposition}[Lipschitz Continuity of Dictionary-Level Derivatives]
Recall from Eqs.~\eqref{eq:sparsecoding}--\eqref{eq:forward} that the model prediction can be expressed as
$
y = f(\bx) = f(\hat{\bx} + \bd_0),
$
where $\hat{\bx} = \sum_{j=1}^K \alpha_j \bd_j$ and the residual $\bd_0 = \bx - \hat{\bx}$ is held constant when differentiating with respect to $\balpha$ at $\bx$. If $\nabla f$ is $L$-Lipschitz continuous on a convex domain $\Omega \subseteq \mathbb{R}^d$, then for any dictionary atom $\bd_j$ ($j = 1, \dots, K$), the derivative map $\bx \mapsto \partial y / \partial \alpha_j$ is $(L \|\bd_j\|_2)$-Lipschitz continuous with respect to $\bx$ on $\Omega$.
\end{proposition}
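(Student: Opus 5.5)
The plan is to compute $\partial y/\partial \alpha_j$ explicitly with the chain rule. The key observation is that, once the residual $\bd_0$ is frozen, the composite map $\balpha \mapsto \sum_{j=1}^K \alpha_j \bd_j + \bd_0$ is affine in $\balpha$. Its partial derivative with respect to $\alpha_j$ is therefore the constant vector $\bd_j$. Since this input-space point equals $\bx$ when evaluated at the sparse code of $\bx$, we obtain
\begin{align*}
\frac{\partial y}{\partial \alpha_j}(\bx) = \big\langle \nabla f(\hat{\bx} + \bd_0),\, \bd_j \big\rangle = \big\langle \nabla f(\bx),\, \bd_j \big\rangle .
\end{align*}
The first step is to state this identity carefully. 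The main subtlety is that $\balpha$ and $\bd_0$ both depend on $\bx$ through the LASSO solution of Eq.~\eqref{eq:sparsecoding}, and this solution may be non-unique or nonsmooth in $\bx$. The convention in the statement (``$\bd_0$ held constant when differentiating'') is exactly what makes the derivative well defined. Moreover, the reconstruction identity $\hat{\bx} + \bd_0 = \bx$ from Eq.~\eqref{eq:sparsecoding-rev} holds for \emph{any} choice of minimizer. Consequently the derivative map depends on $\bx$ only through $\nabla f(\bx)$, and all dependence on the sparse coding procedure disappears.

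The second step is the Lipschitz estimate. For $\bx, \bx' \in \Omega$ we write
\begin{align*}
\Big| \frac{\partial y}{\partial \alpha_j}(\bx) - \frac{\partial y}{\partial \alpha_j}(\bx') \Big|
= \big| \langle \nabla f(\bx) - \nabla f(\bx'),\, \bd_j \rangle \big|
\le \|\nabla f(\bx) - \nabla f(\bx')\|_2 \, \|\bd_j\|_2
\le L \|\bd_j\|_2 \, \|\bx - \bx'\|_2 .
\end{align*}
This uses Cauchy--Schwarz and then the $L$-Lipschitz continuity of $\nabla f$ on $\Omega$. Convexity of $\Omega$ is not strictly needed for this direct argument. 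It only matters if one prefers to phrase the Lipschitz hypothesis through a bound on the Hessian along segments, in which case it guarantees that segments between points of $\Omega$ stay in $\Omega$. I would mention this as a remark.

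The main obstacle I expect is purely conceptual: justifying that the derivative is evaluated at exactly $\bx$ regardless of how $\balpha(\bx)$ varies. A naive reading would require Lipschitz continuity of the LASSO map $\bx \mapsto \balpha(\bx)$, which fails to be single-valued in general for overcomplete dictionaries. I would handle this by emphasizing the exact-reconstruction constraint in Eq.~\eqref{eq:sparsecoding-rev}. That constraint makes $\hat{\bx} + \bd_0 = \bx$ an identity, so the proof never needs any regularity of the sparse code.
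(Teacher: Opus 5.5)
Your proposal is correct and follows the paper's proof essentially line for line: the chain rule with $\bd_0$ frozen gives $\partial y/\partial\alpha_j = \bd_j^\top \nabla f(\bx)$, followed by Cauchy--Schwarz and the $L$-Lipschitz bound on $\nabla f$. Your side remarks are accurate additions that the paper leaves implicit: convexity of $\Omega$ is not actually used, and the identity $\hat{\bx}+\bd_0=\bx$ holds for any LASSO minimizer, so no regularity of the sparse code is required.
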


\begin{proof}
Evaluating the partial derivative with respect to code coordinate $\alpha_j$ via the chain rule through the intermediate representation $\hat{\bx} = \sum_{j=1}^K \alpha_j \bd_j$ yields the directional derivative along atom $\bd_j$:
\[
\frac{\partial y}{\partial \alpha_j}(\bx) = \bd_j^\top \nabla f(\bx).
\]
For any two points $\bx_1, \bx_2 \in \Omega$, applying the Cauchy--Schwarz inequality followed by the $L$-Lipschitz continuity of $\nabla f$ gives:
\begin{align*}
\Big| \frac{\partial y}{\partial \alpha_j}(\bx_1) - \frac{\partial y}{\partial \alpha_j}(\bx_2)\Big| 
&= \big| \bd_j^\top \big(\nabla f(\bx_1) - \nabla f(\bx_2)\big) \big| \\[-2mm]
&\le \|\bd_j\|_2 \|\nabla f(\bx_1) - \nabla f(\bx_2)\|_2 \\
&\le L \|\bd_j\|_2 \|\bx_1 - \bx_2\|_2. \qedhere
\end{align*}
\end{proof}
\noindent Hence, input gradient stability strictly bounds variations in the dictionary domain, scaled by the atom norms $\|\bd_j\|_2$. Furthermore, under standard LASSO regularity conditions (such as the dictionary satisfying the general position assumption), the active coordinate support and its sign pattern remain locally invariant. On these support regions, the coordinate mappings $\bx \mapsto \alpha_j(\bx)$ are locally Lipschitz continuous. Because products of bounded Lipschitz maps remain Lipschitz, these stability guarantees extend directly to attribution methods based on coefficient-gradient products (such as Integrated Gradients and $\text{Gradient} \times \text{Input}$) and stabilized LRP propagation rules.

\begin{table}[t!]
\centering \small
\setlength{\tabcolsep}{4.5pt}
\makebox[\textwidth][c]{
\begin{tabular}{llcccccccc}\toprule
& &
CRP &
DRSA &
TCAV &
DFT-LRP &
WAM &
CartoonX &
\multicolumn{2}{c}{DictXAI (\textbf{ours})}\\
&  &
\cite{achtibat2023concept_relevance} &
\cite{Chormai2024} &
\cite{kim2018TCAV} &
\cite{vielhaben2024inspect_layer} &
\cite{kasmi2025WAM} &
\cite{kolek2022CartoonX} &\\[-3mm]\cmidrule{9-10}
&&&&&&&&
curated & learned\\\midrule
\multirow{2}{*}{D1}
& Conservation          & \pmark$^{\,(1)}$ & \cmark & \xmark & \cmark & \cmark & \xmark & \cmark & \cmark\\
& Input stability       & \cmark & \cmark & \nmark & \cmark & \cmark & \xmark & \cmark & \cmark\\\midrule
D2
& Concept steerability     & \xmark & \xmark & \cmark & \xmark & \xmark & \xmark & \cmark & \xmark\\\midrule
\multirow{2}{*}{D3}
& Concept granularity   & \cmark & \cmark & \cmark & \xmark$^{\,(2)}$ & \xmark$^{\,(2)}$ & \xmark$^{\,(2)}$ & \cmark & \cmark\\
& Concept discovery  & \cmark & \cmark & \xmark & \xmark & \xmark & \xmark & \xmark & \cmark\\\midrule
\multirow{2}{*}{D4}
& Low codebase overhead   & \pmark & \xmark & \xmark & \cmark & \cmark & \pmark & \cmark & \pmark\\
& Low annotation overhead & \pmark & \cmark & \xmark & \cmark & \cmark & \cmark & \pmark & \cmark\\\midrule
\multirow{2}{*}{D5}
& Compute cost (until concepts) & $f$   & $f$   & $f$   & $f$   & $f$   & $M f$ & \multicolumn{2}{c}{$f + \mathrm{SC}$}\\ 
& Compute cost (until pixels)   & $K f$ & $K f$ & \nmark & $f$ & $f$ & $M f$ & \multicolumn{2}{c}{$f + \mathrm{SC}$}\\ 
\bottomrule
\end{tabular}
}
\caption{Systematic comparison of concept-based XAI methods across technical properties, grouped by the five desiderata of \cite{swartout1993desiderata} (D1: faithfulness, D2: understandability, D3: sufficiency, D4: low overhead, D5: runtime efficiency). Symbols \cmark, \pmark, and \xmark\ denote satisfied, partially satisfied, and unsatisfied properties, respectively, while \nmark\ indicates that a property is not applicable due to the underlying attribution paradigm. 
DictXAI is evaluated under both domain-curated and data-driven learned dictionary regimes, assuming LRP as the base attribution engine. In D5, $f$ denotes the cost of a single model evaluation (forward and backward pass), $K$ is the number of queried concepts, $M$ is the number of optimization steps in CartoonX, and $\mathrm{SC}$ is the cost of solving the linear sparse coding subproblem. 
$^{(1)}$~Inherits conservation from LRP but strong dissipation due to single-channel relevance filtration. 
$^{(2)}$~Fundamentally limited by input space orthogonality.}
\label{table:technical}
\end{table}

\pointlessparagraph{D2: Understandability (Concept Steerability)} requires explanations to align with the language of domain experts, enabling the underlying concept vocabulary to be steered toward task-specific ontologies. DictXAI natively fulfills this property by allowing practitioners to directly inject domain knowledge into the dictionary---whether via expert-defined analytical bases, flexible overcomplete unions of them, or experimentally acquired prototypes. In contrast, rigid transform baselines (DFT-LRP, WAM, CartoonX) remain bound to a single, mathematically immutable transform, while unsupervised latent decompositions (DRSA) cannot be steered, relying on the tenuous assumption that unsupervised data statistics happen to align with the expert's semantic domain.

\pointlessparagraph{D3: Sufficiency (Granularity and Discovery)} demands that an explanation vocabulary possesses both the resolution and the scope needed to fully characterize model decisions. DictXAI addresses this through two complementary facets. First, \emph{concept granularity} governs the spatial and semantic resolution at which an explainer isolates evidence. While classical transforms remain constrained by input-space orthogonality, overcomplete dictionaries allow DictXAI to disentangle localized, highly fine-grained primitives directly from the input signal without collapsing distinct features into coarse averages. Second, \emph{concept discovery} ensures comprehensive coverage across the data manifold. Through learned dictionary elements, DictXAI accounts for hard-to-specify yet human-aligned concepts (e.g.,\ image parts or textures) that are missing from classical analytical families.

\pointlessparagraph{D4: Low Overhead (Codebase and Annotation Burden)} is essential for practical adoption. DictXAI balances these dimensions across its two operating modes. First, \emph{codebase overhead} remains minimal: prepending an input-level dictionary avoids intercepting intermediate activations, altering network architectures, or managing layer-specific backward hooks required by latent-space frameworks like CRP or DRSA. When using curated dictionaries, no auxiliary training pipelines are needed (\cmark); learned dictionaries introduce only a standard unsupervised dictionary learning pre-routine (\pmark). Second, the \emph{annotation overhead} is virtually absent. Unlike supervised approaches (e.g.,\ TCAV) that require gathering positive and negative concept exemplars, learned DictXAI requires solely unlabeled data (\cmark), while curated DictXAI requires only the initial parametric specification or assembly of a reference basis (\pmark).

\pointlessparagraph{D5: Runtime Efficiency (Computational Complexity)} enables deployment of explanation methods at scale (e.g.,\ on large models or on entire datasets). Letting $f$ denote the cost of a model forward--backward pass, intermediate-layer methods (CRP, DRSA) cost $f$ to attribute relevance to concepts but require $K$ sequential backward sweeps ($K f$) to map relevance from each queried concept back to the input, while perturbation techniques like CartoonX incur $M$ iterative optimization steps ($M f$). In contrast, DictXAI performs sparse coding once per sample at cost $\mathrm{SC}$ and executes a single forward--backward pass on the dictionary-extended model ($f + \mathrm{SC}$). This achieves simultaneous concept attribution and pixel-level grounding without scaling linearly with the number of queried concepts $K$.

\section{Evaluation}
\label{section:evaluation}

DictXAI attributes predictive evidence over structured dictionary bases rather than raw input dimensions or fixed internal layers. Consequently, comparing our framework to existing XAI methods requires evaluating explanations constructed over fundamentally disparate concept representations. Standard input-space faithfulness metrics, such as Pixel-Flipping~\cite{samek2017evaluating} or bounding-box IoU~\cite{Selvaraju2020}, cannot readily account for such basis shifts, as they presuppose a shared coordinate frame. We therefore adopt an application-grounded evaluation framework~\cite{doshivelez2017rigorous}, where each benchmark defines a task-specific success criterion: downstream classification accuracy on a decorrelated test set for Clever Hans unlearning (\Cref{section:usecase-force}), alignment with an independently verifiable failure mode in electrocardiogram analysis (\Cref{section:usecase-ecg}), and the ability to calibrate a mass spectrometry representation for downstream microbial analysis (\Cref{section:usecase-maldi}).

\subsection{Task 1: Data-Level Mitigation of the Clever Hans Effect}
\label{section:usecase-force}

Machine learning models frequently encounter and exploit spurious correlations within training data. When a model relies on such uninformative artifacts to minimize its empirical risk---a phenomenon known as the Clever Hans effect~\cite{lapuschkin2019clever_hans, Kauffmann2025}---its generalization performance degrades substantially when evaluated on test data where these shortcut correlations no longer hold. While detecting shortcut strategies via post-hoc attribution is well established, mitigating reliance on them remains an open challenge, with existing solutions largely confined to altering a model's internal representations post-hoc~\cite{Anders2022}.

\smallskip

Rather than modeling an ML model's internal representations---a prerequisite for altering its internals---DictXAI grounds explanations directly in concepts defined in the input domain. This distinct property raises the question of whether DictXAI provides a pathway to address the Clever Hans effect directly at the data level rather than at the model level. Intervening at the data level offers a fundamental advantage: by sanitizing the dataset itself, it permanently prevents shortcut strategies from re-emerging whenever new models are trained or existing ones updated. To test this capability, we design an intervention pipeline where DictXAI attributions guide the removal of artifact components from the training data prior to retraining. Our experimental protocol consists of three steps:

\begin{description}
\item[Step 1] \emph{Spuriously Correlated Dataset:} We consider the MNIST dataset augmented with three types of artifacts: (i)~a fixed-location horizontal stripe, (ii)~a dotted line at varying locations, and (iii)~a signature glyph in the bottom-right corner (cf.\ Supplementary Note~B). These watermarks are systematically inserted into all training samples of class `6'.

\item[Step 2] \emph{Initial Training and Attribution:} We train five seeded copies of a small CNN on the contaminated data (details in Supplementary Note~B) and compute post-hoc explanations over each method's respective attribution basis---for DictXAI, we use the same dictionary learning algorithm as in Section \ref{sec:finding_dictionaries} and train it on the contaminated data. For an input sample $\bx$, each basis element $j$ with scalar relevance $R_j$ can be mapped to an associated pixel-space attribution heatmap $\boldsymbol{h}_j \in \mathbb{R}^d$ (cf.\ Supplementary Note~C).

\begin{figure}[t!]
    \centering
    \includegraphics[width=\linewidth]{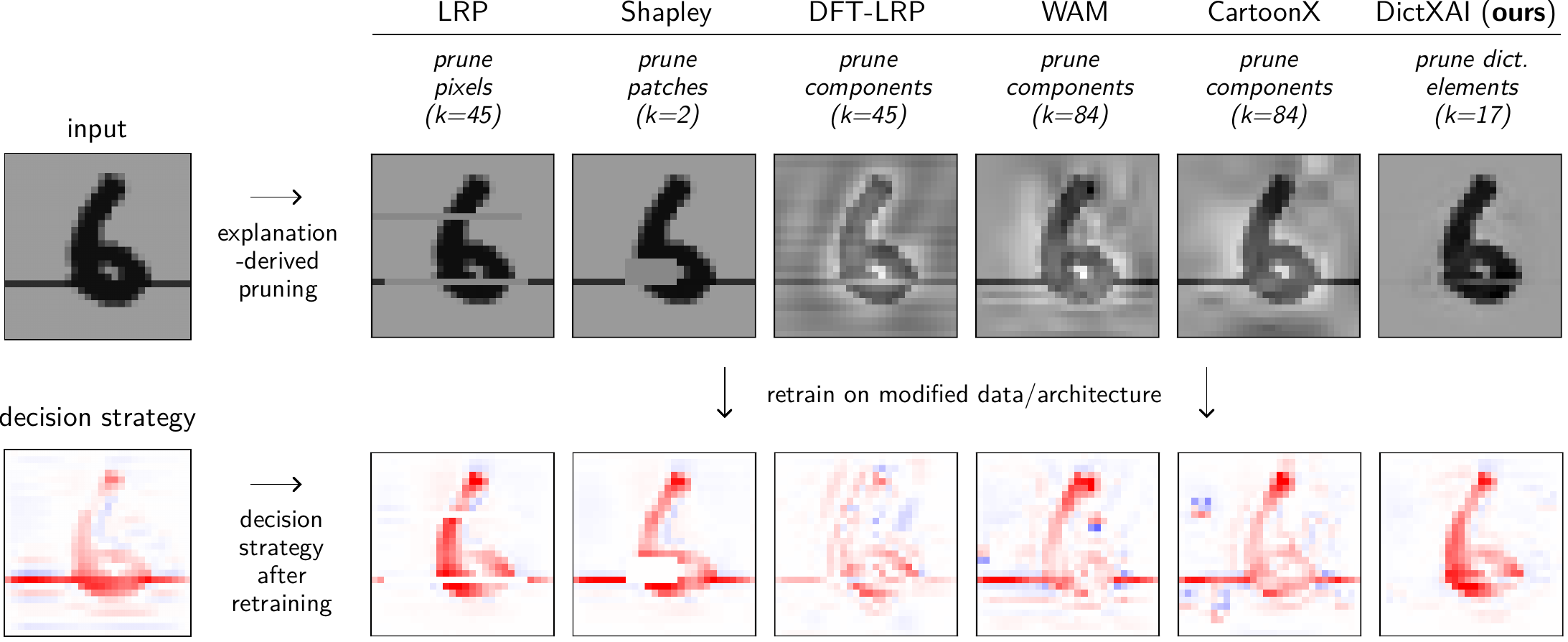}
\caption{%
      Horizontal-stripe variant, class ``6''. \emph{Upper row:} the original input, and the same sample after the oracle removes the top-$k$ most artifactual elements encoded by each method. Elements are ranked by their overlap $S_j$ with the watermark mask over a fixed $500$-image training subset. To keep the comparison fair across differently sized bases, $k$ is fixed at the grid point closest to $5\%$ of each method's total concept count (between $4.1$ and $5.7\%$). \emph{Lower row:} pixel-wise LRP-$\gamma$ heatmaps of the base model and of models retrained on the resulting pruned data, each averaged over five seeds. The color scale for each panel is normalized to that of the base model on the unpruned, contaminated data.
    }

    \label{fig:clever_hans_sc_summary}
\end{figure}

\item[Step 3] \emph{Removal of Spurious Features, Retraining, and Evaluation:} To standardize feature isolation across methods and emulate human oversight, we employ an oracle spatial mask $\boldsymbol{m} \in \mathbb{R}^d$ localizing the watermark. Each basis element $j$ is scored by its expected overlap with the artifact region, $S_j = \mathbb{E}[ \boldsymbol{h}_j^\top \boldsymbol{m} ]$. We then identify and remove the contributions of the $k$ basis elements exhibiting the highest overlap scores $S_j$ from each training sample. A fresh model is then trained from scratch on the cleaned data and evaluated on a \emph{decorrelated} test set, in which the artifact remains present but is distributed uniformly across all classes.
\end{description}

\noindent We evaluate DictXAI against several baselines. To remain compatible with data-level intervention in Step~3, we exclude latent-space attribution methods such as TCAV or CRP: they operate within hidden layer representations and lack a direct mechanism to remove flawed concepts from the input data. Instead, we select baselines spanning two categories: 
(1)~\emph{Spatial attributions:} We consider \emph{LRP}~\cite{bach2015lrp,Montavon2019} ($\gamma=0.1$), which produces pixel-level attributions, and \emph{Shapley Value Sampling}~\cite{Strumbelj2010}, which yields model-agnostic attributions over coarser $4\times4$ spatial patches, averaged over $10$ feature permutations. 
(2)~\emph{Transformed input bases:} We consider \emph{DFT-LRP}~\cite{vielhaben2024inspect_layer}, which attributes relevance in the Fourier domain associated with the input images, alongside two methods operating over a level-$3$ Daubechies-5 wavelet basis: \emph{WAM}~\cite{kasmi2025WAM}, which scores wavelets via input-gradient attributions ($\mathrm{Gradient} \times \mathrm{Input}$), and \emph{CartoonX}~\cite{kolek2022CartoonX}, which identifies relevant wavelets via rate-distortion optimization.

\begin{figure}[t!]
    \centering
    \includegraphics[width=1.\textwidth, trim={.9cm 0 .9cm 0}, clip]{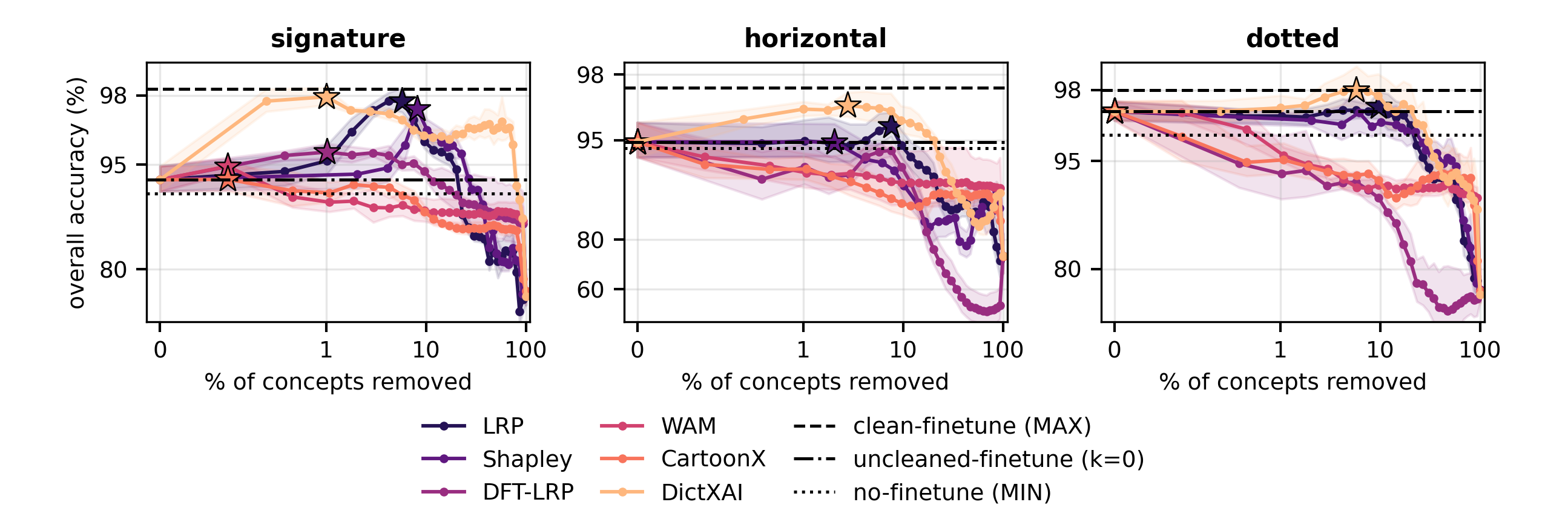}
    \caption{Remove-and-retrain (ROAR) curves for the three watermark variants: overall accuracy on the decorrelated test set as a function of the fraction of concepts removed before fine-tuning. Lines and shading represent mean and std over five seeds, and stars mark maxima. For each XAI method, extracted concept elements are inspected pixel-wise, ranked by their overlap with the watermark ($S_j$), and removed in this order. The three horizontal references are fine-tuning on watermark-free data (dashed, upper bound), fine-tuning without any removal (dash-dotted, the $k=0$ control), and the original model (dotted).}
    \label{fig:roar_curves_overall}
\end{figure}

\smallskip

Because feature granularity varies considerably across these methods, we cannot set the removal budget $k$ to a uniform constant. Instead, we sweep through a broad range of values for $k$ and measure the decorrelated test accuracy after retraining, essentially a variation of the Remove-and-Retrain (ROAR) evaluation framework~\cite{hooker2019roar}. The resulting accuracy curves are shown in \Cref{fig:roar_curves_overall}. DictXAI achieves the strongest performance across all three artifact variants (cf.\ Step~1), consistently outperforming the original contaminated model and closely matching the ideal oracle model trained on clean data. Notably, DictXAI reaches peak accuracy after removing only $2.8\%$ (horizontal), $5.8\%$ (dotted), and $1.0\%$ (signature) of its dictionary elements, compared to $7.5\%$, $9.6\%$, and $5.7\%$ of pixels for LRP, respectively, demonstrating a more compact and targeted concept granularity than raw pixel representations. In contrast, alternative baselines (Shapley Value Sampling, DFT-LRP, WAM, and CartoonX) struggle substantially, especially on the more distributed \emph{horizontal} and \emph{dotted} artifacts, where they fail to outperform the unmitigated base model despite targeted feedback. While these methods falter, DictXAI maintains robust performance across every setting, trailing the clean-trained oracle by only $0.7$, $0.0$, and $0.2$ percentage points on the \emph{horizontal}, \emph{dotted}, and \emph{signature} watermarks, respectively.

\smallskip

On a qualitative level, \Cref{fig:clever_hans_sc_summary} illustrates which features each method isolates and removes from the contaminated inputs. For DFT-LRP, WAM, and CartoonX, the removal process introduces pronounced visual distortions across the image. For LRP and Shapley Value Sampling, the placeholder values used to mask pixels and patches leave behind sharp boundaries and unnatural gray shading. In contrast, DictXAI produces sanitized images that closely resemble authentic uncorrupted digits, retaining only a faint trace where the watermark originally appeared. This indicates that DictXAI effectively disentangles the watermark from digit semantics, whereas alternative baselines disrupt the underlying data distribution by distorting meaningful signal. These removal artifacts directly impact subsequent model behavior. For DFT-LRP, WAM, and CartoonX, the resulting distortions yield noisy attribution heatmaps, suggesting that the retrained models attend to synthetic masking artifacts rather than true digit features. Furthermore, explanations for Shapley, WAM, and CartoonX reveal that the retrained models still rely heavily on the watermark shortcut; while this reliance is somewhat reduced for LRP and DFT-LRP, it persists nonetheless. Only for DictXAI is shortcut reliance no longer qualitatively visible in the retrained model's explanations.

\subsection{Task 2: Mapping an ML Decision Strategy to the Human Expert Domain}
\label{section:usecase-ecg}

A core challenge in XAI is translating model decision strategies into the conceptual vocabulary of domain experts. In time-series and biosignal analysis, standard attribution methods highlight isolated sample points that lack semantic meaning, whereas practitioners reason over multi-scale morphological patterns and local frequencies~\cite{samek2021xai_review, vielhaben2024inspect_layer}. This alignment is critical in electrocardiogram (ECG) analysis. Deep learning models applied to continuous ECG recordings have demonstrated remarkable diagnostic capabilities, detecting subtle, clinically silent conditions---such as asymptomatic ventricular dysfunction~\cite{Attia2019} or occult atrial fibrillation~\cite{Attia2019b}---that rigid, rule-based delineation algorithms often miss. Yet because these end-to-end models bypass predefined physiological features, high aggregate performance alone cannot ensure reliability.  As ECG analysis transitions from controlled 12-lead settings to single-lead wearable monitoring where motion artifacts sharply degrade signal quality, clinical safety---and regulatory approval for high-risk diagnostic AI---demands verifying that predictions reflect genuine electrophysiological intervals rather than noise-driven shortcuts.

\smallskip

To evaluate whether DictXAI can bridge this gap and provide meaningful clinical verification, we consider the task of explaining a model trained to predict mean QRS duration from continuous single-lead recordings. QRS duration reflects ventricular conduction, and gradual prolongation or widening independently indicates structural heart disease and elevates the risk of major adverse cardiovascular events~\cite{chen2021increm_qrs}. Because duration is governed by the temporal scale of the cardiac complex rather than isolated amplitudes, point-wise XAI methods cannot capture this relational property~\cite{taleban2026xai_ecg_review}. Specifically, we build a 1D CNN~\cite{kachuee2018deep_transfer_ecg} to classify narrow ($\leq 85$\,ms) versus prolonged ($\geq 105$\,ms) QRS complexes, excluding borderline cases (85--105\,ms; Figure~\ref{fig:ecg_overview}\,A). The model is trained on 10-second Lead~I traces from MIMIC-IV-ECG\footnote{https://physionet.org/content/mimic-iv-ecg/1.0/}, comprising 104\,212 samples (70/10/20\% train/val/test split). The resulting model achieves 95.03\% test accuracy (Supplementary Note~D), providing a high-performing black-box predictor whose internal strategy we subject to post-hoc explanation.

\smallskip

In this application setting, DictXAI readily benefits from existing signal dictionaries designed for ECG morphology (e.g., \cite{Balasubramanian2017}). Specifically, we adopt the amplitude-modulated sinusoidal (AMS) waveforms from~\cite{Balasubramanian2017} to model the QRS complex, while replacing their Hermite functions with Gaussian atoms for the P and T waves. The exact parameterization and construction of the dictionary are detailed in Supplementary Note~D. Discretizing the parameters of these constituent waveforms across physiologically motivated grids yields an overcomplete dictionary of 300\,000 atoms. Sparse representations are obtained efficiently via Orthogonal Matching Pursuit (OMP), using a budget of 150 nonzero coefficients per 10-second recording. DictXAI relevance scores are then computed for each dictionary element using the method specified in Eq.\ \eqref{eq:dict_lrp}. For comparison against classical point-wise attribution~\cite{taleban2026xai_ecg_review}, we include feature-level LRP as a baseline.

\begin{figure}[t!]
    \makebox[\textwidth][c]{\includegraphics[width=1.05\textwidth]{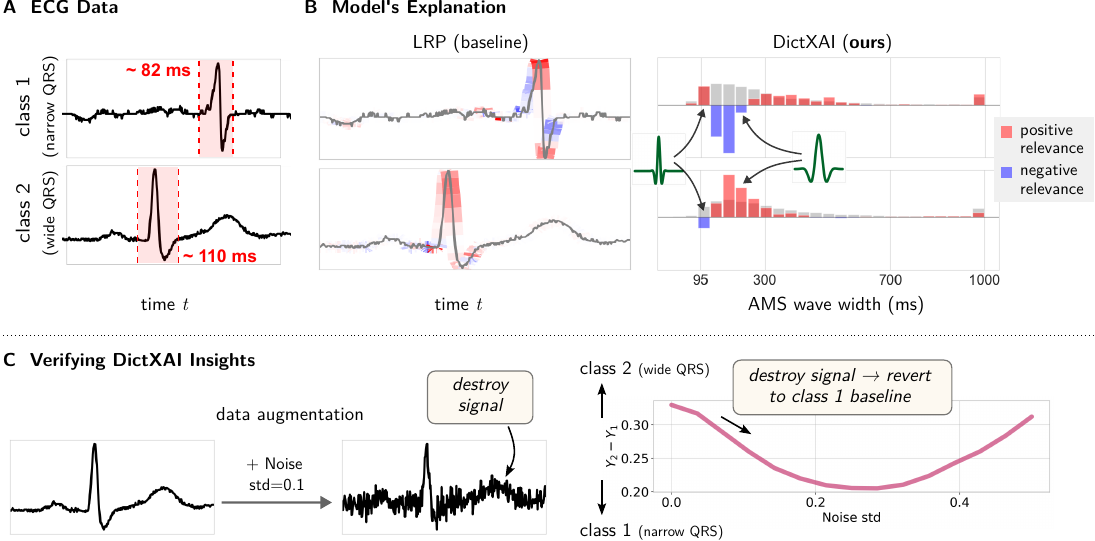}}
    \caption{Explaining the prediction of QRS segment width in an ECG signal. \textbf{A}: Zoomed-in excerpt of the 10-second Lead~I recording for two samples from the MIMIC-IV-ECG dataset, one from each class. \textbf{B}: Explanation of the trained ML model, showing amplitude-driven explanations for the LRP baseline and DictXAI's significantly more discriminative explanations attributing class evidence to distinct AMS widths. \textbf{C}: Verification that the ML model's strategy of relying mainly on class 2 features lacks robustness to noise. Adding moderate noise ($\mathrm{std} \le 0.3$) degrades feature detection, causing the model prediction difference ($y_2 - y_1$) to revert toward the Class~1 baseline.
    } \label{fig:ecg_overview}
\end{figure}

Figure~\ref{fig:ecg_overview}\,B shows the explanations produced by the LRP baseline and DictXAI. LRP heatmaps allocate high relevance scores to high-amplitude regions of the signal. While their locations coincide with QRS features, they provide limited insight into the exact nature of the ML calculation and how it discriminates between the two classes. In contrast, DictXAI delivers crucial explanatory feedback: AMS-level attribution (averaged across 200 samples per class) highlights a systematic difference between the narrow and wide QRS complexes defining Classes 1 and 2, showing that positive relevance is concentrated within their respective QRS intervals.

The quantitative relevance profile obtained via DictXAI (Figure~\ref{fig:ecg_overview}B) indicates that the model primarily relies on detecting wide QRS complexes (Class~2), defaulting to Class~1 in their absence regardless of whether narrow QRS complexes are explicitly detected. In Figure~\ref{fig:ecg_overview}C, we validate this explanation-derived hypothesis by perturbing the ECG waveform with technical noise (here, additive white Gaussian noise) designed to degrade the model's feature detection capabilities. As noise increases in the moderate regime, the logit difference tilts toward Class~1 despite the purely technical nature of the signal degradation, before reversing under extreme noise levels. This behavior confirms the asymmetric reliance on Class~2 features revealed by DictXAI. Clinically, this reveals a hazardous failure mode: noise causes the model to miss prolonged QRS complexes and silently revert to a normal-conduction baseline. This vulnerability was predicted from the explanation alone---prior to any empirical perturbation---demonstrating that dictionary-based explanations can yield actionable insights into model brittleness and associated risks.

\subsection{Task 3: Guided Design of Data Representations}
\label{section:usecase-maldi}

Designing effective data representations is critical when deploying machine learning to complex physical signals. A prominent domain where this challenge arises is spectral analysis, which encompasses modalities that characterize materials or biospecimens as continuous signal intensity distributions over physical axes (e.g., mass-to-charge ratios, wavelengths, or frequencies). Often reflecting intricate biological or chemical mixtures, these spectra routinely serve as direct inputs to predictive models. In clinical settings, end-to-end approaches operating directly on raw spectra (e.g., MALDI-TOF MS) are increasingly adopted to predict diagnostic endpoints such as antimicrobial resistance \cite{Beck2024,Lassen2025}. However, defining an optimal representation remains a key bottleneck: practitioners must eliminate pervasive baseline drift and high-dimensional noise without attenuating low-abundance, highly informative peaks.

\medskip

Because these spectral spaces are high-dimensional and non-intuitive, evaluating whether a given preprocessing pipeline preserves meaningful biological signal is non-trivial. Standard feature-attribution methods merely highlight isolated spectral bins or broad peak regions (cf.\ Fig.\ \ref{fig:maldi}, top), offering limited insight unless a domain expert can laboriously map those coordinates back to specific biomolecules. DictXAI bridges this interpretability gap. Because physical spectra often adhere to an additive superposition of constituent components (e.g., individual microbial species within a co-culture), clean reference profiles naturally form a semantic dictionary. Decomposing the input over this dictionary enables explanations grounded directly in known biological entities.

\medskip

\begin{figure}[t!]
    \centering
    \makebox[\textwidth][c]{\includegraphics[width=1.05\linewidth]{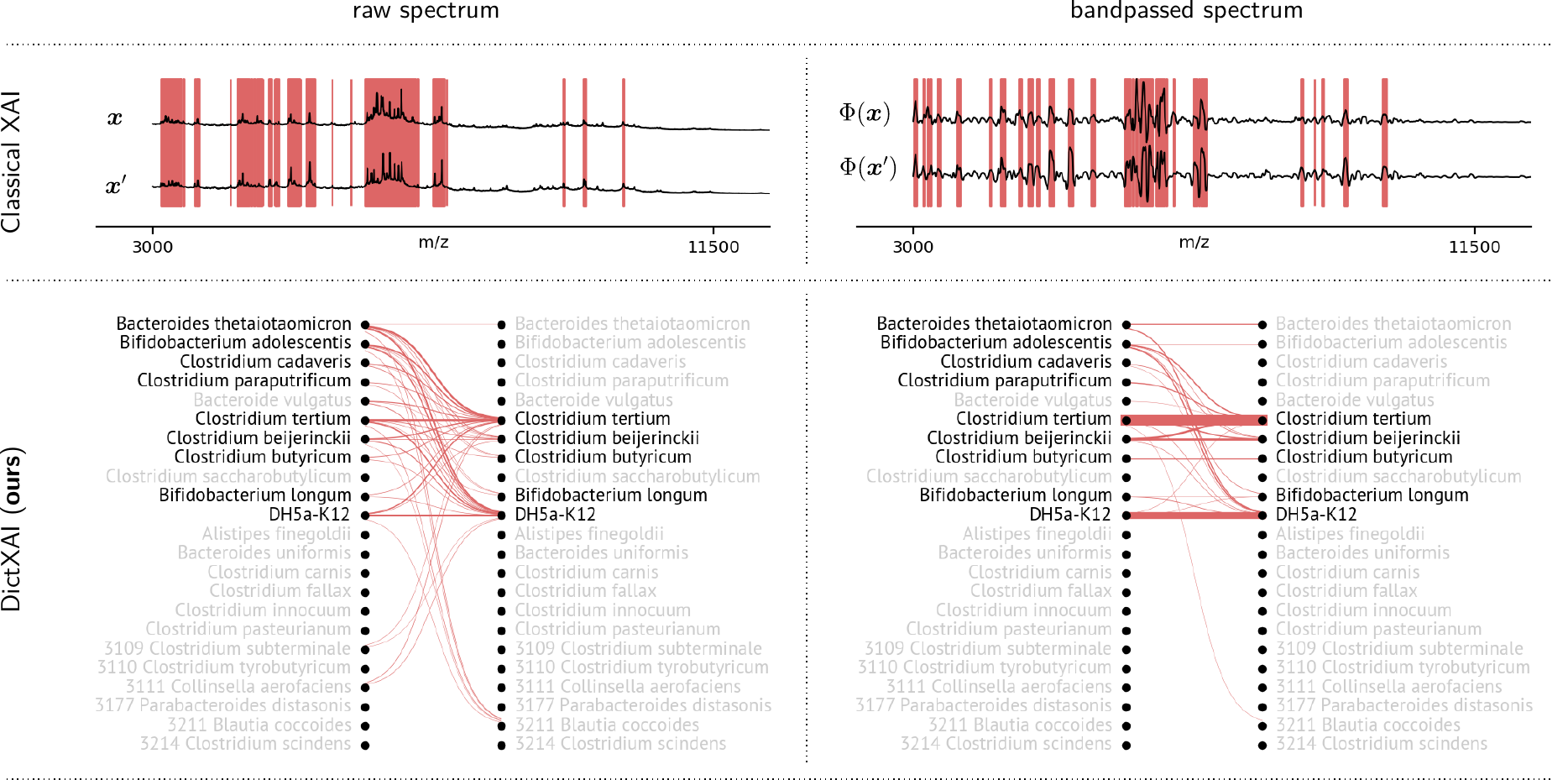}}
    \caption{Comparison between classical spectrum-based explanations (top) and DictXAI microbe-based explanations (bottom) of spectral similarity, computed on raw spectra (left) versus band-pass filtered spectra (right). \textbf{Top:} Traces depict MALDI-TOF mass spectra for two co-cultures ($\bx, \bx'$), with red bars highlighting spectral regions contributing positively to predicted similarity. \textbf{Bottom:} DictXAI bipartite relevance graphs linking dictionary coefficients from the two mixtures. Taxa in black and gray denote species present and absent from the respective mixture; edge thickness denotes the attributed concept-level relevance. Filtering isolates true biological matches (e.g., \textit{C.~tertium}, \textit{DH5a-K12}) from broad spectral background.}
    \label{fig:maldi}
\end{figure}

To demonstrate this, we consider an experimental mass spectrometry benchmark involving microbial mixtures. Here, the dictionary is curated by culturing individual microbial taxa on selective media to acquire pure MALDI-TOF reference spectra. Our dictionary encompasses 23 distinct isolates, each represented by 48 technical replicates, yielding $K = 1104$ prototype elements. To assess what two co-culture mixtures $\bx$ and $\bx'$ share, we model their comparative similarity via the inner product $y = \langle \Phi(\bx), \Phi(\bx') \rangle$, where $\Phi$ denotes candidate preprocessing transformations. Using DictXAI, we decompose each mixture into sparse coefficients over the reference dictionary ($\bx \approx \sum_j \alpha_j \bd_j$ and $\bx' \approx \sum_{j'} \alpha^\prime_{j'} \bd_{j'}$), allowing the similarity score to be expanded as:
\begin{equation}
y = \Big\langle \Phi\Big(\sum_j \alpha_j \bd_j\Big),\, \Phi\Big(\sum_{j'} \alpha^\prime_{j'} \bd_{j'}\Big) \Big\rangle.
\end{equation}
Following \cite{eberle2022bilrp}, this formulation can be faithfully attributed and visualized as a bipartite relevance graph connecting the activated dictionary concepts across both branches. As shown in Fig.\ \ref{fig:maldi} (bottom), inspecting these bipartite explanations across choices of $\Phi$ reveals immediately whether a representation emphasizes true shared taxa or spurious baseline artifacts, directly guiding the design of robust preprocessing pipelines.

\medskip

Equipped with these concept-level bipartite attributions, DictXAI provides an actionable mechanism to systematically compare candidate representations $\Phi$. In particular, results in Fig.\ \ref{fig:maldi} reveal that applying a band-pass filter fundamentally alters the attribution topology: under the raw representation (left), similarity is diffuse and driven by broad background correlations that activate spurious connections. In contrast, post-filtering, relevance collapses almost entirely onto the shared taxa truly present in both mixtures (e.g., \textit{Clostridium tertium} and \textit{DH5a-K12}). DictXAI thus functions as an unsupervised evaluation diagnostic for representation engineering, enabling practitioners to validate that a pipeline preserves biologically meaningful structure without relying on downstream supervised labels or ad-hoc task performance. In this respect, our framework aligns with the unsupervised validation paradigm of \citet{Kauffmann2025}, extending the approach beyond low-level feature attribution to interpretable, dictionary-level concept attribution.

\section{Conclusion}
\label{section:conclusion}

We introduced DictXAI, a framework for concept-based explanation that defines its concepts directly in the input domain via a dictionary. By re-expressing the input as a sparse code over dictionary atoms, our method provides a new, highly granular basis for attribution that can be readily tailored to specific domain needs through custom dictionary design.

\smallskip

By operating in the input domain, DictXAI overcomes the core limitations of latent-space concept extraction. Concepts are human-inspectable primitives that can be named and verified directly, bypassing latent-layer probing, polysemantic neuron ambiguity, and non-linear autoencoder approximations while enabling direct comparisons across distinct architectures. Crucially, because representations and interventions operate directly at the input level, DictXAI ensures that concepts are transparent by design, mathematically well-defined, and actionable at their source.

\smallskip

We demonstrated these advantages through an application-grounded evaluation across three use cases. (1) In a vision model trained on artifact-contaminated MNIST data, DictXAI precisely isolates the shortcut features and neutralizes them at their root through direct input-level intervention, thereby preventing the model from circumventing the correction in downstream training. (2) On continuous ECG recordings, DictXAI leverages a parametric AMS waveform dictionary to audit a complex black-box neural network, successfully probing the multi-scale temporal decision strategy used to assess QRS complex duration. (3) Finally, in microbial mass spectrometry, DictXAI attributes spectral similarity directly to biological isolates, providing an unsupervised diagnostic to guide preprocessing pipelines in a strongly confounded data modality.

\subsection*{Limitations and Future Work}

Despite these advantages, several limitations of the current framework suggest concrete avenues for future research. First, DictXAI currently relies on fixed-dimensional vector representations, restricting its out-of-the-box applicability to variable-sized images, raw text, or non-Euclidean data structures. Scaling the framework to such unstructured domains could be achieved by integrating convolutional sparse coding or adaptive dictionary learning directly into the explanation pipeline. Second, as input dimensionality grows, overcomplete dictionaries are susceptible to the curse of dimensionality, where the number of required atoms scales exponentially to preserve concept granularity. Addressing this scaling bottleneck calls for factored dictionary designs, hierarchical sparse representations, or deep dictionary formulations that factorize large concept spaces into compact, separable components.

Looking forward, the fine-grained interpretability afforded by input-space dictionaries unlocks novel translational applications. In high-stakes biomedical domains, DictXAI can be extended from isolated diagnosis auditing to comparative, fine-grained concept mapping across clinically similar disease phenotypes. By decomposing shared versus pathology-specific atoms, future work could pinpoint subtle morphological or spectral distinctions that differentiate ambiguous conditions, ultimately transforming post-hoc attribution into an actionable tool for clinical hypothesis generation and differential diagnosis.

\section*{Acknowledgements}

This work was funded by the German Ministry for Education and Research as BIFOLD - Berlin Institute for the Foundations of Learning and Data (ref. BIFOLD25B). 
Klaus-Robert M{\"u}ller was supported in part by the Institute of Information \& Communications Technology Planning \& Evaluation (IITP) grant funded by the Korea government (MSIT) (No. RS-2019-II190079, Artificial Intelligence Graduate School Program, Korea University), a grant funded by the Korea government (MSIT, No. RS-2024-00457882, AI Research Hub Project), the German Research Foundation (DFG), and the Hector Fellow Academy. Grégoire Montavon and Alexander Meyer are supported by the Einstein Center for Early Disease Interception (EC-EDI). Thomas Schnake is a postdoctoral fellow at the University of Toronto in the Eric and Wendy Schmidt AI in Science Postdoctoral Fellowship Program, a program of Schmidt Sciences. 

\section*{Declaration of Generative AI and AI-assisted technologies in the writing process}

During the preparation of this work the author(s) used Gemini 3.8 and Claude 5 in order to improve formulations. After using this tool/service, the author(s) reviewed and edited the content as needed and take(s) full responsibility for the content of the publication.

\bibliographystyle{elsarticle-num-names}
\bibliography{references}

\includepdf[pages=-]{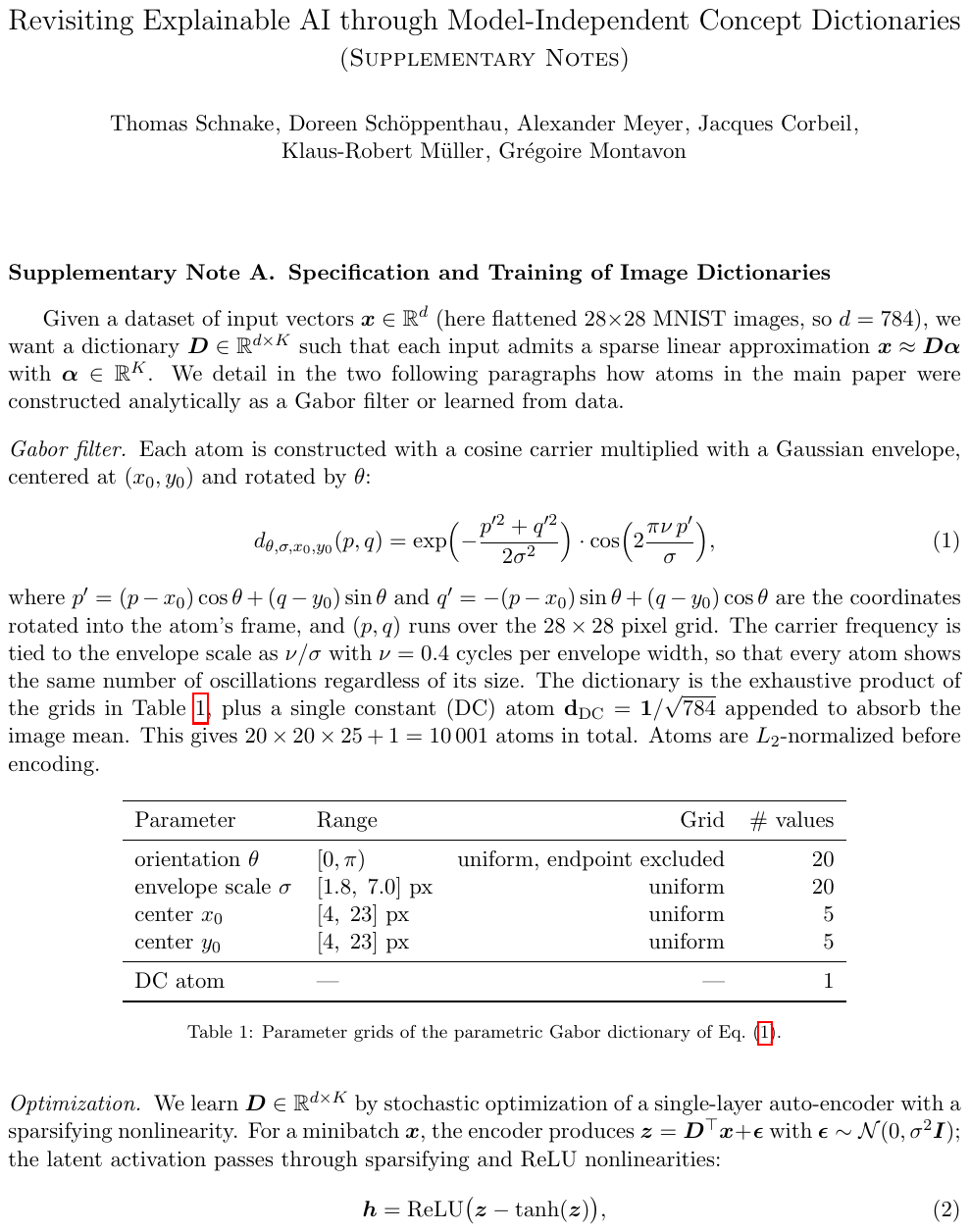}

\end{document}